\newtheorem{lem}{Lemma}
\newcommand{\sig}[1]{{\small\textsf{{#1}}}}
\begin{document}

\title{Unaligned but Safe - Formally Compensating  Performance Limitations for Imprecise 2D Object Detection\thanks{This work is funded by the Bavarian Ministry for Economic Affairs, Regional Development and Energy as part of a project to support the thematic development of the Fraunhofer Institute for Cognitive Systems.}}

\author{Tobias Schuster$^{*}$ \and
Emmanouil Seferis$^{*}$\and
Simon Burton  \and
Chih-Hong Cheng 
}

\def\thefootnote{$*$}\footnotetext{Equal contribution}\def\thefootnote{\arabic{footnote}}
\authorrunning{T. Schuster et al.}
\titlerunning{Compensating  Performance Limitations for Imprecise 2D  Object Detection}

\institute{Fraunhofer Institute for Cognitive Systems\\
Hansastr. 32, 80686 Munich, Germany\\
\vspace{2mm}
\email{\{firstname.lastname\}@iks.fraunhofer.de}}
\maketitle             

\begin{abstract}

In this paper, we consider the imperfection within machine learning-based 2D object detection and its impact on safety. We address a special sub-type of performance limitations: the prediction bounding box cannot be perfectly aligned with the ground truth, but the computed  Intersection-over-Union metric is always larger than a given threshold. Under such type of performance limitation, we formally prove the minimum required bounding box enlargement factor to cover the ground truth.  We then demonstrate that the factor can be mathematically adjusted to a smaller value, provided that the motion planner takes a fixed-length buffer in making its decisions. Finally, observing the difference between an empirically measured enlargement factor and our formally derived worst-case enlargement factor offers an interesting connection between the quantitative evidence (demonstrated by statistics) and the qualitative evidence (demonstrated by worst-case analysis). 

\keywords{Safety \and Object detection \and Deep learning \and Post-processing}

\end{abstract}
\section{Introduction}\label{sec:intro}

The safety of autonomous driving (AD) has become a crucial factor for industry in the admittance of AD functions. For realizing AD functions, deep neural networks (DNNs) are widely used to implement modules such as object detection. It is thus essential to systematically analyze the impact of performance limitations of DNNs; the purpose is to ensure that the limitations are properly compensated by system design and do not lead to unreasonable risks.  

In this paper, we consider a special type of performance limitations, namely \emph{bounding box non-alignment} in the 2D object detection setup. 
Bounding box non-alignment refers to the situation where the prediction can not suitably cover the object. It may impose safety risks, as any object not surrounded by the prediction bounding box can be viewed as an empty space, thereby inducing the risk of collision.
Such type of performance insufficiency is commonly characterized in training by computing the Intersection-over-Union (IoU) ratio between the ground-truth (GT) label bounding box and the prediction bounding box. Provided that the degree of insufficiency is bounded, which can characterized by the computed IoU ratio always being larger than a constant~$\alpha$, the key contribution of this paper is to formally derive the \emph{minimum required enlargement factor} to be imposed on the prediction bounding box to fully cover the GT label. As a consequence, by adding a conservative post-processor after the DNN to enlarge the prediction bounding box using the derived enlargement factor, the imprecision (to the degree governed by~$\alpha$) is guaranteed not to have a safety impact.

Subsequently, we consider the allocation problem for the computed bounding box enlargement. Following the practical observation that the motion planner always reserves a fixed width as a safe buffer, one can thus utilize the buffer and employ a smaller enlargement, provided that the combined effect of the bounding box enlargement (from the safety post-processor) and the buffer from the motion planner is larger than the computed bound. We show that such a sound estimation that guarantees safety is conditional to an assumption over the maximum width of the detected object type (e.g., car). 

Finally, we compare the formally derived enlargement factor with an enlargement factor directly \emph{measured from the training data}, following the methodology in~\cite{cheng_logically_2021}. There can be many interpretations over the value gap. Obviously, the measured enlargement factor to cover the GT label bounding box is smaller, as the formal derivation considers \emph{the worst case scenario} while the worst case scenario may not be present in the training dataset. However, considering the distance between the measured mean enlargement factor to the worst-case computed factor also offers an interesting link between the quantitative evidence (as supported by statistics) and the qualitative evidence (as supported by the worst-case analysis), as gap can be further rewritten by the multiple of the standard deviation~$\sigma$ measured from data.

The rest of the paper is structured as follows. After reviewing related work in Section~\ref{sec:rel_work}, in Section~\ref{sec:spp} we summarize the basic principles of the conservative post-processing algorithm.  In Section~\ref{sec:mathematical.association} we derive the connection between IoU and safety and subsequently in Section~\ref{sec:connection.to.motion.planner}, we consider the situation where motion planners also reserve some buffer to compensate the imprecision. Finally, we evaluate the result by comparing the formal result with the data-driven approach using a case study in Section~\ref{sec:eval}, and conclude in Section~\ref{sec:conclusion} by outlining further research opportunities.

\section{Related Work}\label{sec:rel_work}

The safety of DNNs is currently researched from different angles; we recommend readers to a current survey~\cite{houben2021inspect} conducted by the German national project KI-Absicherung for an overview. On the methodology side, many results on safety argumentation use semi-formal/structural notations with variations on argumentation strategies (to list a few~\cite{burton2017making,zhao2020safety,jia2021framework,salay2021missing}). The value of these results is the offering of a generic argumentation structure, where the purpose of this paper is to demonstrate its implementation aspects for one type of performance insufficiencies. For DNN testing, apart from proposing concrete testing techniques~\cite{PezzementiTYCDG18}, another key direction is to introduce new coverage criteria where the goal is to include diversified test cases such that the computed coverage is sufficiently high. For the white box coverage criteria, neuron coverage~\cite{pei2017deepxplore} and extensions (e.g., SS-coverage~\cite{sun2019structural}) motivated by MC/DC coverage in classical software have been proposed. For the black box coverage criteria, multiple results are utilizing combinatorial testing~\cite{cheng2018quantitative,abrecht2021testing} to argue about the relative completeness of the test data. 
Readers may reference Section~5.1 of a recent survey paper~\cite{huang2020survey} for an overview of existing results in coverage-driven testing. 
However, the key issue for these coverage criteria is that they do not have a direct connection to safety, which is in many cases task specific. Very recently, Lyssenko et al.~\cite{lyssenko2021evaluation} proposed to include a task-oriented relevance factor in the evaluation of DNNs. They used the distance from the sensor to object to derive a relevance metric based on the IoU with a focus on semantic segmentation. 
Additionally, Volk et al.~\cite{volk_comprehensive_2020} defined a comprehensive safety score by considering various factors such as quality, relevance, and reaction time. The safety score is based on extending the basic IoU value. 
Again, to be used in safety argumentation, these metrics need to be connected to concrete performance limitations and to concrete applications, as suggested in safety standards such as ISO 21448~\cite{international_organization_for_standardization_safety_2021}. Our result overcomes the above mentioned limitation: even for the commonly used IoU metric, we can establish a precise and mathematically sound connection with the safety goal by properly restricting ourselves to a particular performance limitation of non-aligning bounding boxes.

Finally, the recent work from Cheng et al.~\cite{cheng_logically_2021} initiated the concept of safety post-processing attached to the standard post-processor to address the insufficiency of imprecise prediction. In~\cite{cheng_logically_2021}, one estimates the enlargement threshold based on the data. This is in contrast to the concept stated in this paper where the enlargement factor is computed using worst-case analysis.
The safety guarantee of the data-driven approach is conditional to an assumption on the generalizability between in-sample and out-of-sample data;
this is not the case for our worst-case derivation. The data-driven and the logical approach complement each other; in our experiments we also consider their connection.

\begin{figure}[t]
\centering
\includegraphics[width=0.99\textwidth]{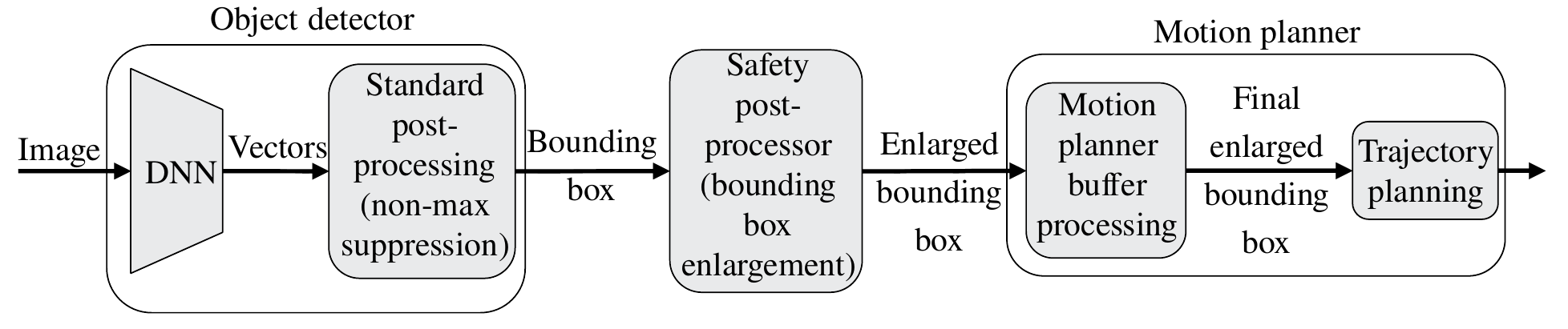}
\caption{The safety post-processor is inserted between the object detector and the motion planer. Here sensor fusion is omitted for simplicity purposes; the basic principle still applies when sensor fusion modules are introduced.}
\label{fig:safe_pp}
\end{figure}

\section{Data-driven Safe Post-Processing in Addressing 2D Object Detection Imprecision}\label{sec:spp}

We first review the commonly used definition of the IoU between two rectangles.

\begin{definition}\label{def:iou}
  Given two 2D rectangles $\sig{R}_A$ and $\sig{R}_B$, the intersection-over-union is defined to be the ratio between the overlapping area of $\sig{R}_A$ and $\sig{R}_B$ (nominator) and the union area of $\sig{R}_A$ and $\sig{R}_B$ (denominator), where $\textsf{area}(R)$ devotes the area of some region $R$ on the 2D plane.   
\begin{equation} 
    \sig{IoU}(\sig{R}_A, \sig{R}_B) =\frac{\textsf{area}(\sig{R}_A  \cap \sig{R}_B)}{\textsf{area}(\sig{R}_A\cup \sig{R}_B)} 
\end{equation}

\end{definition}

Within 2D object detection, the two rectangles used for calculating the IoU are the  prediction bounding box $\sig{R}_{PR}$ and the associated GT bounding box~$\sig{R}_{GT}$. We also assume that all considered bounding boxes are horizontally laid out rectangles, i.e., all rectangles are \emph{axis-aligned}.

We now summarize the principle of safe post-processors (SPP) as defined in~\cite{cheng_logically_2021} using Figure~\ref{fig:safe_pp}, where introducing the  post-processor between object detector and motion planner is meant to \emph{compensate the performance insufficiency caused by non-alignment between prediction bounding box and GT label bounding box}. While the general principle is applicable also for 3D detection, in this paper we restrict ourselves to the discussion on 2D front-view detection.

\begin{enumerate}
\item For each image collected in the training dataset, and for each predicted bounding box ($\sig{R}_{PR_i}$) that only partially covers the associated GT bounding box $\sig{R}_{GT_i}$ but has $\sig{IoU}(\sig{R}_{PR_i}, \sig{R}_{GT_i}) \geq \alpha$, one measures the minimum enlargement factor required to enclose the GT bounding box. An illustration is shown in Figure~\ref{fig:question1}, where as $\sig{R}_{PR}$ does not enclose $\sig{R}_{GT}$: one can properly enlarge $\sig{R}_{PR}$ to $\sig{R}_{PR'}$, and the enlargement factor from $\sig{R}_{PR}$ to $\sig{R}_{PR'}$ is the ratio of two widths (or two heights) between the two rectangles. 

\item Aggregate the enlargement factor for all images in the training dataset and for all bounding boxes analyzed in the previous step. This can be done by taking the maximum value, further denoted as $k_{max, data}$, or by taking the mean value $k_{\mu, data}$ plus some additional buffers if desired. 
\item Finally, add an SPP unit after the standard bounding box detector, as illustrated in Figure~\ref{fig:safe_pp}. During operation, for each image captured by a camera sensor, the SPP always enlarges each predicted bounding box by the factor computed in the previous step.  
\end{enumerate}

This method for determining the enlargement factor is \emph{learned/measured from the training data}, where in the following section, we will describe a method that computes the required enlargement factor by conservatively considering, under the condition where $\sig{IoU}(\sig{R}_{PR}, \sig{R}_{GT}) \geq \alpha$,  %applicable for 
all possible overlapping scenarios.

\section{Mathematically Associating the IoU Metric and Safety}\label{sec:mathematical.association}

In this section, we present the key result of the paper, namely the formal derivation of the \emph{minimum required enlargement factor} to fully cover the ground truth bounding box (a situation that we refer to be ``safe"), under the condition $\sig{IoU} \geq \alpha$, by considering \emph{the theoretical worst case scenario}.

\vspace{-5mm}

\subsection{The Mathematical Connection between IoU and Safety}\label{sec:k_value}
We first formally define the enlargement factor with the help of Figure~\ref{fig:b_expansion}. Consider a rectangle $\sig{R}$ with center $O$, half-width $w$ and half-height $h$, as depicted on the left of Figure~\ref{fig:b_expansion}. Then the definition of an enlargement factor can be stated using Definition~\ref{def:enlargement.factor}. The enlarged rectangle $\sig{R}'$ is shown on the right of Figure~\ref{fig:b_expansion}. Note that this is equivalent to multiplying the length and width of $\sig{R}$ by $k$, while keeping the center fixed.

\begin{definition}\label{def:enlargement.factor}
The $k$-expansion ($k \geq 1$) transforms a rectangle $\sig{R}$ to a new rectangle $\sig{R}'$  by keeping the center $O$ fixed while multiplying $w, h$ by $k$, i.e., $w' = k \cdot w$, $h' = k \cdot h$. The value~$k$ is called the enlargement factor.
\end{definition}

\begin{figure}[t]
\centering
\includegraphics[width=0.8\textwidth]{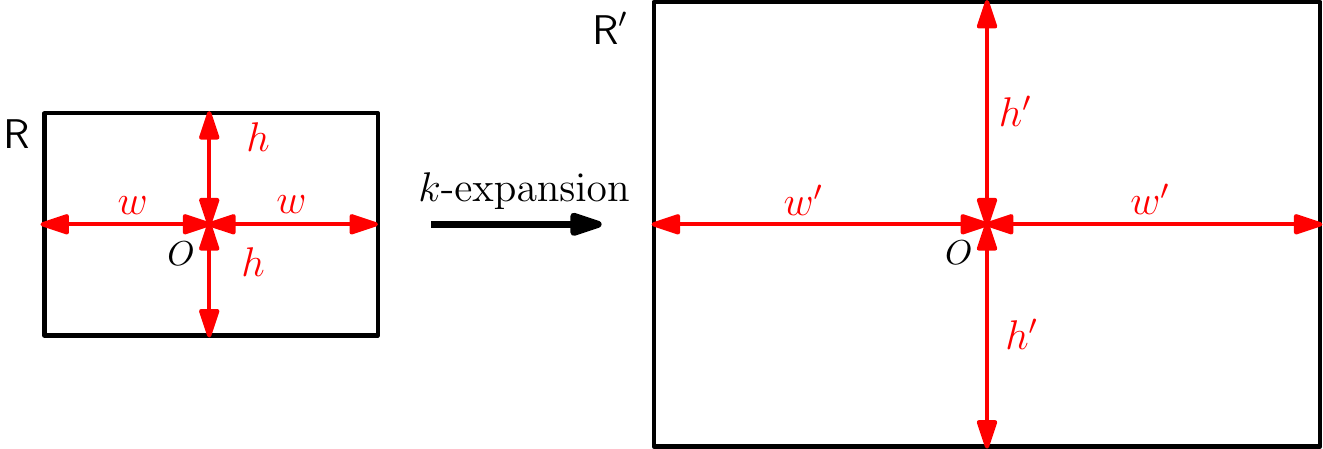}
\caption{A rectangle $\sig{R}$ (left), and it's $k$-expansion $\sig{R}'$ (right)}
\label{fig:b_expansion}
\end{figure}

Assuming that no safety-aware post-processing exists, a complete enclosure of an object (in training or testing, an object is represented by the GT label) by the predicted bounding box is necessary to achieve safe detection. However, when considering a safety-aware post-processing step that enlarges the predicted bounding box by a certain margin, the risk due to a small amount of imprecision in detection can be compensated by the enlargement strategy. As a consequence, the IoU metric could still be used to determine a safe detection and leads to the following research question:

\begin{question}\label{qu:research_question}
Within 2D object detection, assume that a ground-truth label  $\sig{R}_{GT}$ is intersecting with the prediction $\sig{R}_{PR}$, both as horizontally laid out rectangles as shown in Figure~\ref{fig:question1}, with an $\sig{IoU}(\sig{R}_{GT}, \sig{R}_{PR})\geq \alpha$, where $\alpha \in (0,1]$. What is the minimum $k$-expansion to be applied on $\sig{R}_{PR}$ such that it can fully cover $\sig{R}_{GT}$?
\end{question}

We introduce the following example as a special case, which is later used in answering Question~\ref{qu:research_question}. 

\begin{example}\label{ex:example_1}
Consider the ground-truth label $\sig{R}_{GT}$, and the prediction $\sig{R}_{PR}$ that is fully covered by  $\sig{R}_{GT}$ and only deviating from $\sig{R}_{GT}$ in one direction as depicted in Figure~\ref{fig:example1}.
Let the width of $\sig{R}_{GT}$ to be $l$ and the height to be $h$ and let the prediction width be~$\alpha l$. What is the minimum $k$-expansion so that the $k$-expanded $\sig{R}_{PR}$ covers $\sig{R}_{GT}$?
\end{example}

\begin{figure}[t]
\centering
\begin{minipage}[t]{0.48\textwidth}
\centering
\includegraphics[height=0.15\textheight]{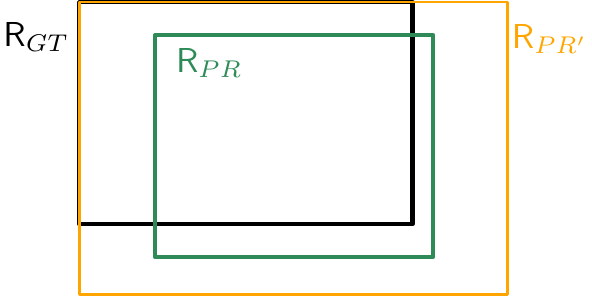}
\caption{The ground-truth labeling bounding box $\sig{R}_{GT}$, prediction $\sig{R}_{PR}$, and the $k$-expanded prediction $\sig{R}_{PR'}$ that covers $\sig{R}_{GT}$.}
\label{fig:question1}
\end{minipage}\hspace{2mm}
\begin{minipage}[t]{0.48\textwidth}
\centering
\includegraphics[width=0.7\textwidth]{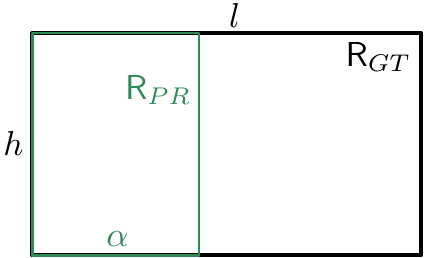}
\caption{A special case where $\sig{R}_{GT}$ and $\sig{R}_{PR}$ have the same height. }
\label{fig:example1}
\end{minipage}
\end{figure}

%\begin{solution}
\noindent\emph{(Solution to Example 1)} Note that the height dimension is already covered, therefore, we focus on the width. Currently, the half-width of $PR$ is $w = \frac{\alpha l}{2}$. In order to cover $\sig{R}_{GT}$, the half-width $w$ of $\sig{R}_{PR}$  has to increase by the distance $l - \alpha l$, to reach the bottom-right corner of $\sig{R}_{GT}$ to cover it. Thus, the new half-width will be $w' = w + (l - \alpha l)$, and the minimum $k$ value is:
\begin{equation}
\nonumber
    k = \frac{w'}{w} = \frac{\frac{\alpha l}{2} + l - \alpha l}{\frac{\alpha l}{2}} = \frac{2 - \alpha}{\alpha}
\end{equation}
Moreover, noticing that the IoU in this case is exactly $\alpha$, we can also express $k$ in terms of the IoU: 
\begin{equation}\label{eq:worst_buffer}
    k = \frac{2 - \sig{IoU}(\sig{R}_{PR}, \sig{R}_{GT})}{\sig{IoU}(\sig{R}_{PR}, \sig{R}_{GT})}
\end{equation}
\qed
%\end{solution}

\vspace{3mm}

Before extending the previous example to the general case of Question~\ref{qu:research_question}, we introduce the following required Lemma~\ref{lemma:containment}, which states that an axis-aligned rectangle contained in a larger axis-aligned rectangle will still be contained when enlarging both rectangles with the same factor $k \geq 1$. This is based on the fact that the expansion does not change the center for $\sig{R}'$ and $\sig{R}$. Therefore, when both rectangles enlarge themselves by an identical constant factor, the original area containment relation remains. The complete proof can be found in Appendix~\ref{sec:lemma_proof}.

\begin{lem} \label{lemma:containment}
Consider an axis-aligned rectangle $\sig{R}$, and a second axis-aligned rectangle $\sig{R}'$ that contains $\sig{R}$. The region containment relation 
holds subject to the $k$-expansion, i.e., the $k$-expanded $\sig{R}$ will still be contained in the $k$-expanded $\sig{R}'$, for any $k \geq 1$. 
\end{lem}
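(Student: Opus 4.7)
The plan is to reduce the 2D containment to two one-dimensional containment statements on the $x$- and $y$-projections of the rectangles, since both $\sig{R}$ and $\sig{R}'$ are axis-aligned. Because the $k$-expansion acts independently and identically on the horizontal and vertical extents of a rectangle, it then suffices to show that an interval containment on the real line is preserved when both intervals are scaled by the same factor $k \geq 1$ about their respective midpoints.

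First I would parametrize the rectangles in the style of Definition~\ref{def:enlargement.factor}: let $\sig{R}$ have center $(c_x, c_y)$ with half-extents $(w, h)$, and let $\sig{R}'$ have center $(c'_x, c'_y)$ with half-extents $(w', h')$. The containment $\sig{R} \subseteq \sig{R}'$ is then captured by the four scalar inequalities
\begin{equation*}
c'_x - w' \;\leq\; c_x - w, \quad c_x + w \;\leq\; c'_x + w',
\end{equation*}
and their $y$-counterparts. These rearrange into the compact form $|c_x - c'_x| \leq w' - w$ and $|c_y - c'_y| \leq h' - h$, which in particular force $w' \geq w$ and $h' \geq h$.

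Next I would invoke the defining property of $k$-expansion: the centers $c_x, c'_x, c_y, c'_y$ are invariant, while the half-extents become $kw, kh, kw', kh'$. The containment conditions to verify after expansion reduce to $|c_x - c'_x| \leq k(w' - w)$ and $|c_y - c'_y| \leq k(h' - h)$. Both follow from the pre-expansion inequalities by multiplying through by $k \geq 1$, using the already-established nonnegativity of $w' - w$ and $h' - h$.

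The main obstacle, such as it is, is purely notational: keeping track of all four edge inequalities per axis and invoking $k \geq 1$ in the right direction given the derived signs of $w' - w$ and $h' - h$. There is no geometric subtlety here, because axis-aligned containment decouples across the two axes and because each center is a fixed point of its own expansion, so the slack between an outer edge and the corresponding inner edge can only grow under a common scaling by $k \geq 1$.
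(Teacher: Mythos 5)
Your proposal is correct and takes essentially the same route as the paper: both reduce containment to per-axis comparisons of fixed centers and half-extents, and both observe that the slack between corresponding sides (nonnegative by the assumed containment, in your notation $w'-w-|c_x-c'_x|\geq 0$) can only grow when both half-extents are scaled by a common $k \geq 1$ about fixed centers. Your absolute-value inequalities $|c_x-c'_x|\leq w'-w$ and $|c_y-c'_y|\leq h'-h$ are just a compact restatement of the paper's side-by-side signed-distance argument.
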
 

We now state the main theorem and its proof answering Question~\ref{qu:research_question}, where it turns out that 
the situation stated in Example~\ref{ex:example_1} actually characterizes \emph{the theoretical worst case scenario} between the prediction bounding box and GT label.

\begin{theorem}\label{theorem:main}
Let $\alpha \in (0,1]$ be a constant, and let $\sig{R}_{PR}$ and $\sig{R}_{GT}$ be the axis-aligned  prediction  and ground-truth bounding boxes that satisfy the following constraint:
$$\sig{IoU}(\sig{R}_{PR}, \sig{R}_{GT}) \geq \alpha$$
Then the minimum $k$-expansion for $\sig{R}_{PR}$ to cover $\sig{R}_{GT}$ is characterized by~$k = \frac{2-\alpha}{\alpha}$.
\end{theorem}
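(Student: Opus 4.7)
The plan is to show matching lower and upper bounds on $k$. The lower bound is immediate: Example~\ref{ex:example_1} already exhibits a pair with $\sig{IoU} = \alpha$ whose minimum covering $k$-expansion equals $\tfrac{2-\alpha}{\alpha}$, so no uniform bound can be smaller. The real work is the upper bound, namely that every $(\sig{R}_{PR}, \sig{R}_{GT})$ satisfying the IoU constraint is covered by a $k$-expansion of $\sig{R}_{PR}$ with $k = \tfrac{2-\alpha}{\alpha}$.

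First I would perform a monotonicity reduction to collapse to the sub-case $\sig{R}_{PR} \subseteq \sig{R}_{GT}$. Let $\sig{R}_{PR}^{*} := \sig{R}_{PR} \cap \sig{R}_{GT}$, which is still an axis-aligned rectangle and is non-empty since the IoU is positive. By Lemma~\ref{lemma:containment}, $\sig{R}_{PR}^{*} \subseteq \sig{R}_{PR}$ implies that for any $k \geq 1$ the $k$-expansion of $\sig{R}_{PR}^{*}$ is contained in the $k$-expansion of $\sig{R}_{PR}$, so if the former covers $\sig{R}_{GT}$ then so does the latter. Moreover $\sig{R}_{PR}^{*} \cap \sig{R}_{GT} = \sig{R}_{PR}^{*} = \sig{R}_{PR} \cap \sig{R}_{GT}$, while $\sig{R}_{PR}^{*} \cup \sig{R}_{GT} = \sig{R}_{GT}$ has area no larger than $\sig{R}_{PR} \cup \sig{R}_{GT}$, so $\sig{IoU}(\sig{R}_{PR}^{*}, \sig{R}_{GT}) \geq \sig{IoU}(\sig{R}_{PR}, \sig{R}_{GT}) \geq \alpha$. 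It therefore suffices to prove the bound under $\sig{R}_{PR} \subseteq \sig{R}_{GT}$.

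In that reduced setting, place $\sig{R}_{GT}$ at the origin with half-dimensions $w_G, h_G$, and write $\sig{R}_{PR}$'s half-dimensions as $w, h$ and its center as $(c_x, c_y)$. The containment is equivalent to $|c_x| + w \leq w_G$ and $|c_y| + h \leq h_G$, and the IoU simplifies to $\tfrac{wh}{w_G h_G} \geq \alpha$. An edge-by-edge check shows the minimum covering $k$ equals $\max\bigl(\tfrac{|c_x| + w_G}{w},\ \tfrac{|c_y| + h_G}{h}\bigr)$; by symmetry I may focus on the $x$-term. Combining $wh \geq \alpha w_G h_G$ with $h \leq h_G$ yields $w \geq \alpha w_G$, and then
\begin{equation*}
\frac{|c_x| + w_G}{w} \;\leq\; \frac{(w_G - w) + w_G}{w} \;=\; \frac{2 w_G}{w} - 1 \;\leq\; \frac{2}{\alpha} - 1 \;=\; \frac{2-\alpha}{\alpha},
\end{equation*}
which is the desired upper bound.

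The step I expect to need the most care is the reduction via $\sig{R}_{PR}^{*}$: it is important that Lemma~\ref{lemma:containment} be invoked in the correct direction (it is the smaller rectangle's $k$-expansion that sits inside the larger one, so replacing $\sig{R}_{PR}$ by $\sig{R}_{PR}^{*}$ only makes the covering task harder, which is precisely what I want for an upper-bound argument) and that shrinking $\sig{R}_{PR}$ genuinely does not decrease the IoU. Once inside the contained sub-case the rest is a short chain of elementary inequalities, and the equality case $w = \alpha w_G$, $h = h_G$, $|c_x| = w_G - w$ exactly recovers the extremal configuration of Example~\ref{ex:example_1}.
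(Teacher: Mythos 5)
Your proposal is correct and follows essentially the same route as the paper: both reduce to the intersection rectangle contained in $\sig{R}_{GT}$, use the area inequality $\textsf{area}(\sig{R}_{GT}) \leq \textsf{area}(\sig{R}_{I})/\alpha$ to bound the width (and height) ratio by $1/\alpha$, bound the per-dimension expansion in the contained case, transfer back to $\sig{R}_{PR}$ via Lemma~\ref{lemma:containment}, and obtain tightness from Example~\ref{ex:example_1}. Your coordinate formulation with the explicit $\max\bigl(\tfrac{|c_x|+w_G}{w}, \tfrac{|c_y|+h_G}{h}\bigr)$ is just a slightly more explicit rendering of the same argument.
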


\begin{proof}
There are many different cases for the intersection and union between the prediction and GT rectangles (e.g., prediction overlapping with GT, prediction completely inside GT, etc.). Therefore, we start the proof by considering the relation between the GT label and the \emph{intersection}, not the prediction. This leads to a simplified sub-problem where we can solve easily and find the required~$k$ value. Subsequently, by using Lemma~\ref{lemma:containment}, we extrapolate from the intersection to the prediction bounding box and finally, we show the tightness of the result. 

We denote the intersection of $\sig{R}_{GT}$ and $\sig{R}_{PR}$ as $\sig{R}_{I}$, and their union by $\sig{R}_{U}$. Moreover, we denote the areas of $\sig{R}_{GT}$, $\sig{R}_{I}$ and $\sig{R}_{U}$ as $\textsf{area}(\sig{R}_{GT})$, $\textsf{area}(\sig{R}_{I})$ and $\textsf{area}(\sig{R}_{U})$. From Definition~\ref{def:iou} of the IoU, we derive: 
\begin{equation}
\label{eq:prop1_1}
    \sig{IoU}(\sig{R}_{GT}, \sig{R}_{PR}) = \frac{\textsf{area}(\sig{R}_{I})}
    {\textsf{area}(\sig{R}_{U})} \geq \alpha
\end{equation}

Since the $\textsf{area}(\sig{R}_{U})$ is always larger or equal to $\textsf{area}(\sig{R}_{GT})$, we derive:
\begin{equation}
\label{eq:prop1_2}
    \alpha \leq \sig{IoU}(\sig{R}_{GT}, \sig{R}_{PR}) = \frac{\textsf{area}(\sig{R}_{I})}
    {\textsf{area}(\sig{R}_{U})} \leq \frac{\textsf{area}(\sig{R}_{I})}
    {\textsf{area}(\sig{R}_{GT})} \Leftrightarrow
    \textsf{area}(\sig{R}_{GT}) \leq \frac{\textsf{area}(\sig{R}_{I})}{\alpha}
\end{equation}

\begin{figure}[t]
\centering
\begin{minipage}[t]{0.48\textwidth}
\centering
\includegraphics[width=0.7\textwidth]{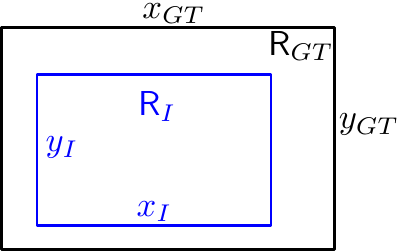}
\caption{Example ground-truth (black) and intersection (blue) rectangle.}
\label{fig:prop1a}
\end{minipage}\hspace{2mm}
\begin{minipage}[t]{0.48\textwidth}
\centering
\includegraphics[width=0.65\textwidth]{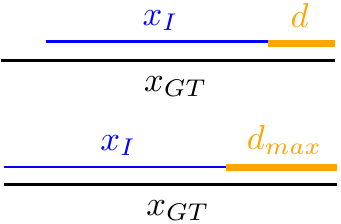}
\caption{The two line segments $x_{GT}, x_I$ and the distance $d$ between them.}
\label{fig:prop1c}
\end{minipage}
\end{figure}

Consider now the intersection and the GT label as shown in Figure~\ref{fig:prop1a}. Note that Figure~\ref{fig:prop1a} represents only one case; in fact, the only prerequisite for the proof is that the intersection is contained in $\sig{R}_{GT}$ - its exact location does not change the proof.
Let $x_{GT}$ and $y_{GT}$ be the width and height of $\sig{R}_{GT}$, and let $x_I$ and $y_I$ be the width and height of the intersection $\sig{R}_I$ respectively in Figure~\ref{fig:prop1a}. Let $r_x = x_{GT} / x_I$ be the ratio of the widths of $\sig{R}_{GT}$ and $\sig{R}_I$, and $r_y = y_{GT} / y_I$ the ratio of the heights of $\sig{R}_{GT}$ and $\sig{R}_I$. Then, the area of $\sig{R}_{GT}$ in terms of $r_x, r_y$ is given by Equation~\ref{eq:prop1_3}. 
\begin{equation}
\label{eq:prop1_3}
    \textsf{area}(\sig{R}_{GT}) = x_{GT} \cdot y_{GT} = r_x x_I \cdot r_y y_I = r_x r_y (x_I \cdot y_I) = r_x r_y \textsf{area}(\sig{R}_{I})   
\end{equation}

From Equation~\ref{eq:prop1_2} it is known that $\textsf{area}(\sig{R}_{GT}) \leq \textsf{area}(\sig{R}_{I}) / \alpha$, thus, combining it with Equation~\ref{eq:prop1_3}, we get Equation~\ref{eq:prop1_4}.
\begin{equation}
\label{eq:prop1_4}
    \textsf{area}(\sig{R}_{GT}) = r_x r_y \textsf{area}(\sig{R}_{I}) \leq \frac{\textsf{area}(\sig{R}_{I})}{\alpha} \Leftrightarrow
    r_x r_y \leq \frac{1}{\alpha}
\end{equation}

That is, the product of $r_x, r_y$ is bounded by $\frac{1}{\alpha}$. Since $r_x \geq 1, r_y \geq 1$ (the intersection is contained in $GT$ and cannot be larger than $GT$), the maximum value one can take for one of these ratios is~$\frac{1}{\alpha}$. Without loss of generality, we consider the width (the proof can be derived for the height in the same way). That is, $x_{GT}$ is at most $\frac{x_I}{\alpha}$ due to the below inequality:
\begin{equation}\label{eq:prop1_5}
    x_{GT} = r_x x_I \leq \frac{1}{\alpha} \cdot x_I = \frac{x_I}{\alpha} 
\end{equation}

%%%%%%%%%%%%%%%%%%%%%%%%%%%%%
% A small-small lemma that is not listed to argue that the worst case really falls at the situation when $r_x =\alpha$. However, this is easy. 

%$\forall r_x, r_y \geq 1$ where $r_x r_y = \frac{1}{\alpha}$,  the worst-case expansion factor $k$ for $\sig{R}_{I}$ to cover $\sig{R}_{GT}$ will occur when 
%\begin{itemize}
%    \item $r_x = \frac{1}{\alpha}$ and $r_y = 1$, or 
%    \item $r_x = 1$ and $r_y = \frac{1}{\alpha}$.
%\end{itemize}

%\proof Assume that $r_x = \frac{1}{c}$ and $r_y = \frac{1}{d}$ and $cd=\alpha$, but $c < \alpha$ and $d < \alpha$, the expansion factor will be  $ k = \max\{ \frac{2 - c}{c}, \frac{2 - d}{d}\}$. As  $c < \alpha$ and $d < \alpha$, so $ k = \max\{ \frac{2 - c}{c}, \frac{2 - d}{d}\} \leq  \max\{ \frac{2 - \alpha}{\alpha}\}$. 

%%%%%%%%%%%%%%%%%%%%%%%%%%%%%

% Here we are abusing a bit the notation, as we only say k-expand a rectangle, not a line. 

Given that, how much do we need to $k$-expand $x_I$ in order to cover $x_{GT}$? Now, we can focus solely on the line segments $x_{GT}$ and $x_I$, as shown in  Figure~\ref{fig:prop1c}. For $x_I$ to cover $x_{GT}$, we must add the distance $d$ from the endpoint of $x_I$ up to the endpoint of $x_{GT}$. This distance is at most $d \leq d_{max} = x_{GT} - x_I$, since $x_I$ is contained within $x_{GT}$, and occurs when $x_I$ and $x_{GT}$ align on one side. Therefore, the original half-width $w_I = \frac{x_I}{2}$ of the intersection must increase at most by a distance $d_{max} = x_{GT} - x_I$, leading to (using Equation~\ref{eq:prop1_5}) the following enlarged half-width in the worst case (maximum possible):

\begin{align}\label{eq:prop1_6}
\begin{split}
    w_I' \leq w_I + d_{max} & = w_I + x_{GT} - x_I \leq  w_I + x_I(\frac{1}{\alpha} - 1) \Rightarrow \\
    w_{I, max}' & = w_I + x_I(\frac{1}{\alpha} - 1)
\end{split}
\end{align}

\noindent With this, the worst-case expansion factor $k$ for $\sig{R}_{I}$ to cover $\sig{R}_{GT}$ will be

\begin{align}\label{eq:prop_main_equation}
\begin{split}
    k &= \frac{w_{I, max}'}{w_I} = \frac{w_I + x_I(\frac{1}{\alpha} - 1)}{w_I} \Rightarrow \\
    k & = \frac{\frac{x_I}{2} + x_I(\frac{1}{\alpha} - 1)}{\frac{x_I}{2}} \Leftrightarrow \\
    k & = \frac{x_I + 2 x_I(\frac{1}{\alpha} - 1)}{x_I} \Leftrightarrow \\
    k &= 1 + 2(\frac{1}{\alpha} - 1) = \frac{2}{\alpha} - 1 \Leftrightarrow\\
    k &= \frac{2 - \alpha}{\alpha} 
    \end{split}
\end{align}

Now, the rectangle that should be expanded is the prediction $\sig{R}_{PR}$, not the intersection $\sig{R}_{I}$. However, due to Lemma~\ref{lemma:containment}, since $\sig{R}_{PR}$ contains the intersection~$\sig{R}_{I}$, the $k$-expanded $\sig{R}_{PR}$ will contain the $k$-expanded intersection, which in turn contains $\sig{R}_{GT}$. Thus, expanding $\sig{R}_{PR}$ by $k$ can also cover $\sig{R}_{GT}$ in all cases. 

Finally, the bound $k$ obtained in Equation~\ref{eq:prop_main_equation} for expanding $\sig{R}_{PR}$ is tight, since there are cases such as Example~\ref{ex:example_1} where $k =\frac{2-\alpha}{\alpha}$ is necessary. This concludes the proof. 

\end{proof}

The consequence of Theorem~\ref{theorem:main} is that by inverting Question~\ref{qu:research_question}, one can  compute a safe IoU threshold based on a fixed $k$ value\footnote{Due to space limits, we refer readers to  Appendix~\ref{sec:invert_question} for further details.}. From now on, the \textit{theoretically derived} $k$ value using Theorem~\ref{theorem:main} will be denoted as $k_{math}$. 

\section{Connecting Motion Planners with Safety Post-Processing}\label{sec:connection.to.motion.planner}

\begin{wrapfigure}[15]{R}{0.5\textwidth}
\vspace{-5mm}
\centering
\includegraphics[width=0.4\textwidth]{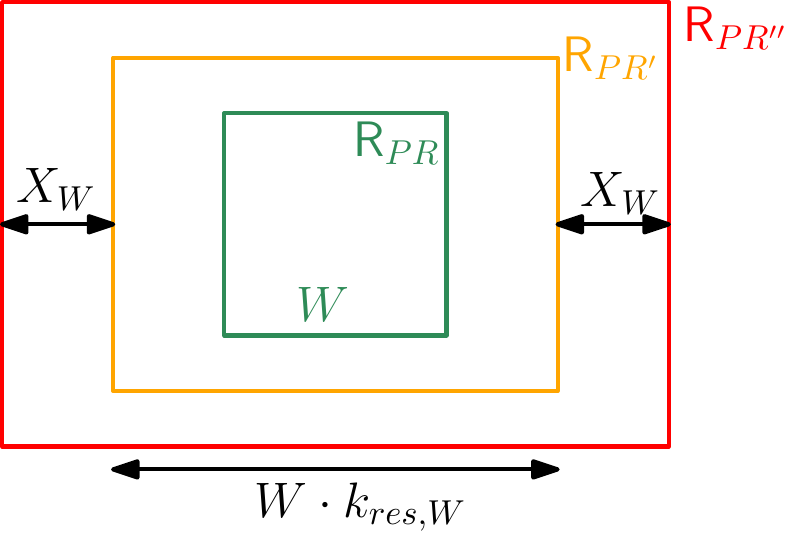}
\caption{Motion planner buffer enlargement on top of safety post-processing. $\sig{R}_{PR}$ denotes the predicted bounding box, $\sig{R}_{PR'}$ the $k$-expanded $\sig{R}_{PR}$ and $\sig{R}_{PR''}$ the $\sig{R}_{PR'}$ with additional motion planner buffer $X_W$. }
\label{fig:physical_buffer}
\end{wrapfigure}

In this section, we present the mathematical relation between motion planning and safety-aware post-processing. As can be seen in Figure~\ref{fig:safe_pp}, after the prediction bounding boxes are enlarged by the SPP, the enlarged predictions are then passed to the motion planner that can also add a physical buffer before planning the trajectory. However, the formally derived $k$ value in Section~\ref{sec:k_value} assumes no extra motion planner buffer to be applied to the enlarged bounding box. If the motion planner always adds a physical buffer to the enlarged bounding box, it is not required to apply the SPP with a $k$ value following Theorem~\ref{theorem:main}. More precisely, as long as the effect of the SPP and the motion planner is larger than the~$k$ value from Theorem~\ref{theorem:main}, the prediction can be considered safe. 

Precisely, let \textbf{ $k_{res,W}$} be the (residual) enlargement factor for the width (similar methodology equally applicable to height) when considering the physical buffer $X_W$ to be added by the motion planner to each bounding box on both sides, as seen in Figure~\ref{fig:physical_buffer}. Furthermore, we consider that a prediction bounding box $\sig{R}_{PR}$ of an object has an initial physical width of $W$. After applying $k_{res, W}$, the new width is $W\cdot k_{res, W}$. Finally, considering the motion planner buffer, the final width is $2X_W + W k_{res, W}$. Then, the effect of SPP and motion planner can be characterized by Equation~\ref{eq:buffer.connection}, which requires that the total enlargement factor due to the SPP and  the motion planner exceeds the given enlargement threshold $k_{math}$ derived from Theorem~\ref{theorem:main}. For simplicity, in this paper we further assume that all objects as well as the point-of-view are placed on a flat surface environment.

\begin{equation}\label{eq:buffer.connection}
\begin{aligned}
        \frac{\frac{2X_W+Wk_{res,W}}{2}}{\frac{W}{2}}\geq k_{math} \Leftrightarrow \\
        \frac{2X_W}{W}+k_{res,W}\geq k_{math}
\end{aligned}
\end{equation}

Further, by transforming Equation~\ref{eq:buffer.connection} we derive the $k_{res,W}$ value to be used by the SPP in Equation~\ref{eq:residual.enlargement}. As one can see, the smallest $k_{res,W}$ guaranteeing safety is determined by the lower bound of combined enlargement $k$ as well as the physical motion planner buffer $X_W$, and is conditional on an assumption over the \textbf{\emph{maximum observed width}} $W_{max}$ of the detected object type, e.g,``car". Furthermore, note that the SPP does not decrease the bounding box size, leading to the constraint in Equation~\ref{eq:residual.constraint}. Combining Equation~\ref{eq:residual.enlargement} and \ref{eq:residual.constraint} leads to the minimum $k_{res,W}$ value $k_{res,W,min}$ determined by Equation~\ref{eq:residual.enlargement.2}.
\begin{equation}\label{eq:residual.enlargement}
    k_{res,W} \geq k_{math} - \frac{2X_W}{W} %\Leftrightarrow
\end{equation}
\begin{equation}\label{eq:residual.constraint}
    k_{res,W}\geq1
\end{equation}
\begin{equation}\label{eq:residual.enlargement.2}
    k_{res,W,min} = max \left ( k_{math} - \frac{2X_W}{W_{max}},1 \right )
\end{equation}

%Equation~\ref{eq:residual.enlargement.2} can be applied independently over each type of identified objects. 

Situations when $W_{max}$ appears can be computed analytically. Consider the identified object to be of class ``car". One can derive that the largest observed width occurs when a ``car" object satisfies the following two conditions:
\begin{itemize}
    \item The car's diagonal has maximum length. 
    \item The car's diagonal is oriented 90° towards the ego vehicle's front-facing axis. 
\end{itemize}

As an example, let the physical buffer be $X_{W}=50cm$ and $k_{math}(\alpha=0.5)=3$. According to German traffic law, the largest ``car" has a width of $250cm$ and a length of $700cm$. Therefore, the largest observed object width will be the diagonal, i.e., $W_{max,car}=\sqrt{700^2+250^2}=743cm$. These considerations result in the enlargement factor $k_{res,W,min,car}=2.87$ for the object with type ``car". For any other ``car" object with an observed width $W'_{car}\leq W_{max,car}$, the combined enlargement is larger or equal to $k_{math}$.
\begin{equation}\label{eq:physical_proof1}
    \frac{2X_W}{W'_{car}}+k_{res,W,min,car}\geq \frac{2X_W}{W_{max,car}}+k_{res,W,min,car}=k_{math}
\end{equation}

Here we omit further details, but a similar analysis technique can be applied for the height of the detected objects. Finally, the similar analysis technique is also applicable for data-driven SPP as stated in Section~\ref{sec:spp}: instead of taking the formally derived $k_{math}$ in Theorem~\ref{theorem:main}, one simply replaces $k_{math}$  by  the measured value such as $k_{max,data}$.

\section{Evaluation}\label{sec:eval}

We perform an empirical study to understand the difference between an empirically measured enlargement factor (cf Section~\ref{sec:spp}) and our formally derived worst-case enlargement factor (using Theorem~\ref{theorem:main}). This overall offers an interesting connection between the quantitative evidence (demonstrated by statistics) and qualitative evidence (demonstrated by worst-case analysis).

For the case study, we choose YOLO V5s~\cite{jocher_ultralyticsyolov5_2021}, a single-stage object detector pretrained on the COCO dataset~\cite{lin2014microsoft}. Moreover, we use a small automotive image dataset\footnote{\url{https://github.com/DanielHfnr/Carla-Object-Detection-Dataset}} generated with the CARLA\footnote{\url{https://carla.org/}} simulator, containing 820 training images and 208 test images with objects of the classes bike, motorbike, traffic light, traffic sign and vehicle which was split into car and truck. The dataset is generated via driving in autopilot, taking images from the ego vehicles perspective and the bounding box labels were generated from the semantic segmentation information with manual adjustment and correction afterwards. All other hyperparameters remain default (and are not tuned as we are not interested in finding the best model but rather want to show the connection between IoU and safety). For training and validation, we apply a 90-10 split, resulting in 738 and 82 images for the respective datasets. For generating the predictions on the training dataset, we set the standard post-processing parameters confidence threshold and non-maximum suppression threshold to be $0.5$. Based on the above configuration, for a given IoU threshold value $\alpha$ from $0.1$ to $0.9$, we have conducted the following experiments for the width of the object class ``car":

\vspace{-2mm}

\begin{description}
   \item[1. Mathematical worst-case enlargement factor] First, we derive the mathematical worst-case $k$ value $k_{math}$ following Theorem~\ref{theorem:main} where no physical buffer is assigned. The results are reflected in the first row of Table~\ref{tab:enlargement_factors}.
   \item[2. Data-enabled worst-case enlargement factor] We further use the method in Section 3 to derive the measured worst-case $k$ value where no physical buffer is assigned. $k_{max,W,data}$ records the maximum observed enlargement factor for width in the second row of Table~\ref{tab:enlargement_factors}.
   \item[3. Data-enabled average enlargement factor] We again use the method in Section 3 to derive the measured average $k$ value $k_{\mu,W,data}$ and the standard deviation $\sigma_{W,data}$ for width where no physical buffer is assigned. They are recorded in Table~\ref{tab:enlargement_factors}, row three and four. Additionally, we record the measured average $k$ value plus three standard deviations ($k_{\mu, W,data} +3\sigma_{W,data}$) and plus six standard deviations ($k_{\mu, W,data} +6\sigma_{W,data}$), with values stored in Table~\ref{tab:enlargement_factors}, row five and six.
   \item[4. Combined effect of SPP and motion planner] Lastly, we investigate the combination of SPP and motion planner buffer by analyzing the influence of the physical buffer for width $X_W$ on the $k_{res,W,min}$ values.
\end{description}

\vspace{-5mm}

\subsubsection{Mathematical and Measured Enlargement Factors}
We first compare the \textit{measured} and \textit{formally derived} $k$ values by comparing the first and the second rows of Table~\ref{tab:enlargement_factors}.

\begin{table}[t]
\centering
\caption{The formally derived and measured $k$ values for the object class ``car".}
\label{tab:enlargement_factors}
\begin{tabular}{l|r|r|r|r|r|r|r|r|r|}
\textbf{$\alpha$ } & \textbf{0.1 } & \textbf{0.2 } & \textbf{0.3 } & \textbf{0.4 } & \textbf{0.5 } & \textbf{0.6 } & \textbf{0.7 } & \textbf{0.8 } & \textbf{0.9 } \\ 
\hhline{==========|}
$k_{math}$ & 19.000 & 9.000 & 5.667 & 4.000 & 3.000 & 2.333 & 1.857 & 1.500 & 1.222 \\ 
\hline
$k_{max,W,data}$ & 4.400 & 2.360 & 2.360 & 2.360 & 2.261 & 2.000 & 1.588 & 1.444 & 1.128 \\ 
\hline
$k_{\mu, W,data}$ & 1.083 & 1.078 & 1.078 & 1.078 & 1.075 & 1.070 & 1.057 & 1.044 & 1.023 \\
\hline
$\sigma_{W,data}$ & 0.176 & 0.130 & 0.130 & 0.129 & 0.118 & 0.105 & 0.078 & 0.058 & 0.030 \\
\hline
$k_{\mu, W,data} +3\sigma_{W,data}$ & 1.612 & 1.468 & 1.468 & 1.464 & 1.428 & 1.383 & 1.291 & 1.216 & 1.112\\
\hline
$k_{\mu, W,data} +6\sigma_{W,data}$ & 2.141 & 1.857 & 1.857 & 1.850 & 1.780 & 1.697 & 1.524 & 1.389 & 1.202 \\
\end{tabular}
\end{table}

\begin{figure}[t]
\centering
\begin{minipage}[t]{0.48\textwidth}
\centering
\includegraphics[width=0.99\textwidth]{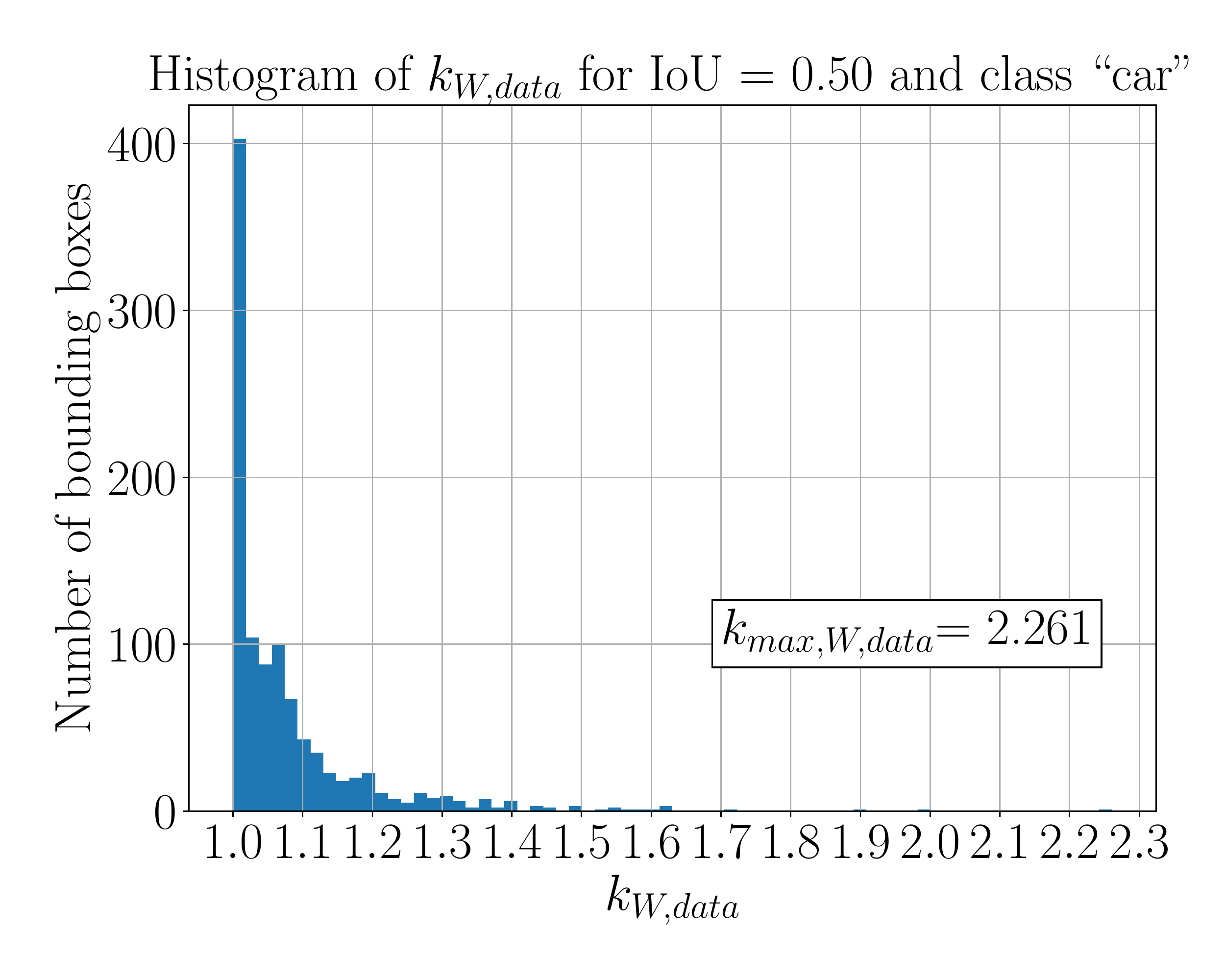}
\caption{Histogram of $k_{W,data}$ values for class ``car" at $\sig{IoU}\geq0.5$.}
\label{fig:hist_k_values}
\end{minipage}\hspace{0.5mm}
\begin{minipage}[t]{0.48\textwidth}
\centering
\includegraphics[width=0.99\textwidth]{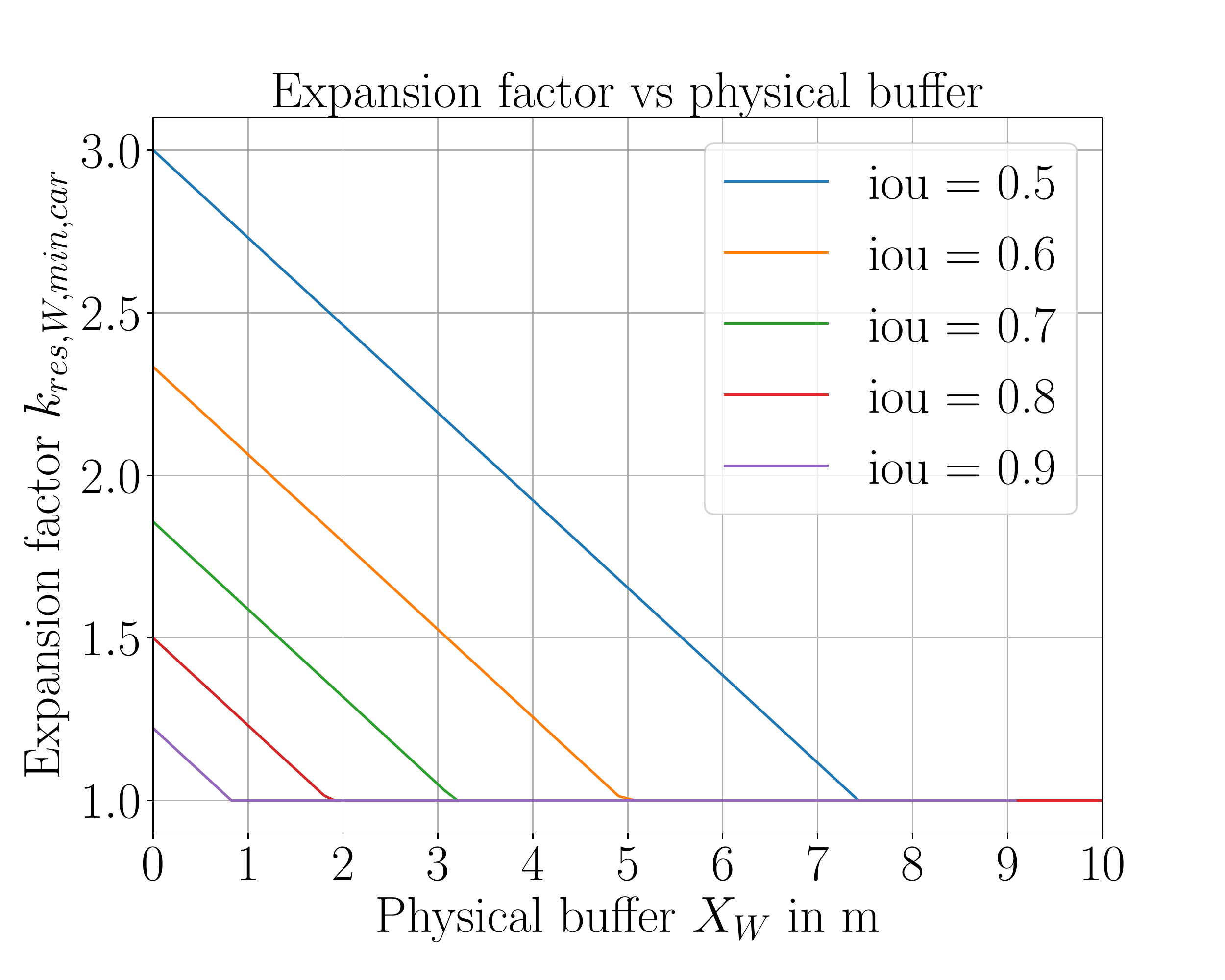}
\caption{The relation between $k_{res,W,min}$ and $X_W$ for class ``car" with respect to varying IoU values.}
\label{fig:correlation_plot}
\end{minipage}
\end{figure}

Without surprise, we can observe that $k_{math} > k_{max, W, data}$, i.e., all $k$ values observed on the data are lower than the theoretical ones. This is expected since the mathematically derived $k$-expansion factor provably considers all possible cases, but these worst cases rarely appear in reality. Moreover, one can observe that for an increasing IoU threshold, the measured values $k_{max,W,data}$ and $k_{\mu, W,data}$ decrease, similarly to the mathematical value $k_{math}$, as the predicted bounding boxes deviate less from the GT bounding box with increasing IoU. Additionally, we observe the following points:

\begin{enumerate}
    \item For high IoU thresholds like $0.8$ or $0.9$, the measured worst case value $k_{max,W,data}$ and $k_{\mu, W,data} +6\sigma_{W,data}$ are only slightly lower than the theoretical worst case value $k_{math}$. Considering low IoU values like $0.1$ or $0.2$, we observe the opposite; the measured worst case value $k_{max,W,data}$ and $k_{\mu, W,data} +6\sigma_{W,data}$ are significantly lower than the theoretical worst case~$k_{math}$.\footnote{If we assume that the occurrence of bounding box non-alignment is a random variable, and the measured mean and variance match the real ones, then from the Chebyshev's inequality we know that the probability of exceeding $6\sigma_{W,data}$ is below $2.78\%$.} 
    \item From the distribution of measured  $k$ values $k_{W,data}$, e.g. for $\sig{IoU}\geq0.5$ in Figure~\ref{fig:hist_k_values}, we can observe that it is a one-sided distribution with the majority of values close to one.  Still, the probability of requiring a large $k$ value is low.
    \item We see that for any IoU threshold, the distance between $k_{math}$ and $k_{max, W, data}$ is always larger than three standard deviations $\sigma_{W,data}$, except for $\sig{IoU}\geq0.8$.
\end{enumerate}

\vspace{-3mm}

\subsubsection{Connecting SPP and Motion Planner}
We present the results of experiment~4 on the connection between the SPP and the motion planner buffer. For different thresholds $\sig{IoU}_{thres}$ and $k_{math}$ values, assuming a maximum observed ``car" width of $W_{max,car} = 7.43m$, we can derive $k_{res,W,min,car}$ as a function of the physical buffer $X_W$ using Equation~\ref{eq:residual.enlargement.2}. The result is visualized by Figure~\ref{fig:correlation_plot}, where we plot $k_{res,W,min,car}$ with respect to $X_W$ for various IoU thresholds. 

From Figure~\ref{fig:correlation_plot}, we can observe that $k_{res,W,min,car}=1$ when the physical buffer exceeds a certain value. Indeed, as we can also see from Equation~\ref{eq:residual.enlargement.2}, when the physical buffer becomes large enough and surpasses a threshold $X_{W, thres}$, the motion planner is by itself sufficient to guarantee safety, and no further enlargement by the SPP module is required. Otherwise, without a physical buffer, the enlargement is purely based on the SPP module. Moreover, we can see that this threshold value $X_{W, thres}$ is larger for lower IoU values. This is also reasonable, since for a small IoU, a larger physical buffer is necessary to guarantee safety. Finally, for large IoU values such as $\sig{IoU} \geq 0.9$, a physical buffer of $X_{W, thres} = 0.82m$ or larger can guarantee safety by itself.

\section{Concluding Remarks}\label{sec:conclusion}

In this paper, we presented a formal approach to counteract the DNN performance insufficiency regarding \textit{bounding box non-alignment}. The result is subject to the condition that the non-alignment is under control, i.e., characterized by the computed IoU being always larger than a fixed threshold. The main result of this paper (Theorem~\ref{theorem:main}) provides a criterion to conservatively enlarge the prediction bounding box via an additional post-processing step after DNN-based object detection, in order to safely cover the object. We further studied the case when the motion planner also reserves some buffer, where the introduced post-processing and the buffer should altogether achieve the expansion governed by Theorem~\ref{theorem:main}. Having such a unified analysis ensures that the resulting system is not acting overly conservatively without considering the capabilities of other components. Finally, our empirical evaluation on a simulation-based dataset demonstrates that the mathematically derived expansion factor is mostly larger than the empirically measured one with one  
standard deviation. 

This work continues our vision of offering a rigorous methodology to systematically analyze performance limitations for DNNs and subsequently, provide counter-measures that are rooted in scientific rigor. We conclude by outlining some research directions currently under investigation: (a) Consider other types of DNN insufficiencies such as false negatives (disappearing objects) or false positives (ghost objects). (b) Extend the formalism by considering the interplay among multiple perception pipelines and the resulting sensor fusion. (c) Extend the theoretical framework to also cover DNN insufficiencies in 3D object detection. (d) Consider a fine-grained IoU metric and the corresponding worst-case expansion that is less conservative.

%
% ---- Bibliography ----
%
% BibTeX users should specify bibliography style 'splncs04'.
% References will then be sorted and formatted in the correct style.
%
\bibliographystyle{splncs04}
%\bibliography{bibliography}
%

\newpage 
\appendix
\section{Appendix}

%\textbf{Note to the reviewers}: Contents within the appendix are for review purposes and will not be placed in the final proceedings if the paper is accepted. 

\subsection{Proof of Lemma~\ref{lemma:containment}}\label{sec:lemma_proof}

\setcounter{lem}{0}

\begin{lem}
Consider an axis-aligned rectangle $\sig{R}$, and a second axis-aligned rectangle $\sig{R}'$ that contains $\sig{R}$, as illustrated in Figure~\ref{fig:lemma1}. The region containment relation $\sig{R} \subseteq \sig{R}'$
holds subject to the $k$-expansion, i.e., the $k$-expanded $\sig{R}$ will still be contained in the $k$-expanded $\sig{R}'$, for any $k \geq 1$. 
\end{lem}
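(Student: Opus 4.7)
The plan is to reduce the two-dimensional statement to two independent one-dimensional statements, exploiting that both $\sig{R}$ and $\sig{R}'$ are axis-aligned. Any axis-aligned rectangle is the Cartesian product of its projections onto the $x$- and $y$-axes, and containment $\sig{R} \subseteq \sig{R}'$ holds if and only if each projected interval of $\sig{R}$ is contained in the corresponding projected interval of $\sig{R}'$. Crucially, the $k$-expansion decouples across axes as well: because it fixes the center and scales widths and heights independently, it acts on each projection as a scaling about that projection's midpoint. It therefore suffices to prove the one-dimensional analogue: if $[a-w, a+w] \subseteq [a'-w', a'+w']$, then $[a-kw, a+kw] \subseteq [a'-kw', a'+kw']$ for every $k \geq 1$.

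First I would fix notation, letting $\sig{R}$ have center $(a,b)$ with half-width $w$ and half-height $h$, and $\sig{R}'$ have center $(a',b')$ with half-width $w'$ and half-height $h'$. I would then rewrite the original one-dimensional containment on the $x$-axis as the single inequality
\begin{equation}\nonumber
w' - w \;\geq\; |a - a'|,
\end{equation}
which is the standard characterization obtained by combining the two endpoint inequalities $a'-w' \leq a-w$ and $a+w \leq a'+w'$. An analogous inequality $h' - h \geq |b - b'|$ holds on the $y$-axis.

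Second, I would observe that after $k$-expansion the projected intervals become $[a-kw, a+kw]$ and $[a'-kw', a'+kw']$, since centers are preserved by the expansion. The corresponding post-expansion containment thus reduces to the single inequality $k(w' - w) \geq |a - a'|$. The final step is then a one-line computation: using the hypothesis $w' - w \geq |a - a'| \geq 0$ together with $k \geq 1$ gives
\begin{equation}\nonumber
k(w' - w) \;\geq\; w' - w \;\geq\; |a - a'|,
\end{equation}
which is the desired post-expansion $x$-axis containment. Repeating the argument verbatim on the $y$-axis and recombining the two projections yields $k$-expanded $\sig{R}$ contained in $k$-expanded $\sig{R}'$.

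I do not expect any real obstacle here; the only subtlety to flag is that one cannot argue by ``both rectangles grow by the same factor, so containment is preserved'' without accounting for the possibly different centers. The correct invariant to scale is the slack $w' - w$ between the half-widths, which by hypothesis already dominates the center offset $|a - a'|$; multiplying this slack by $k \geq 1$ only makes the inequality stronger, which is exactly why the lemma holds for every $k \geq 1$ and would fail in general for $k < 1$.
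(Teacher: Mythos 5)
Your proof is correct and follows essentially the same route as the paper's: both arguments fix the centers under the $k$-expansion and observe that the slack between half-widths scales as $k(w'-w) \geq w'-w$ while the center offset is unchanged, so containment is preserved. The only cosmetic difference is that you package the two boundary conditions per axis into a single absolute-value inequality, whereas the paper checks the signed distance for each of the four sides separately.
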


\begin{figure}
\centering
\includegraphics[width=0.45\textwidth]{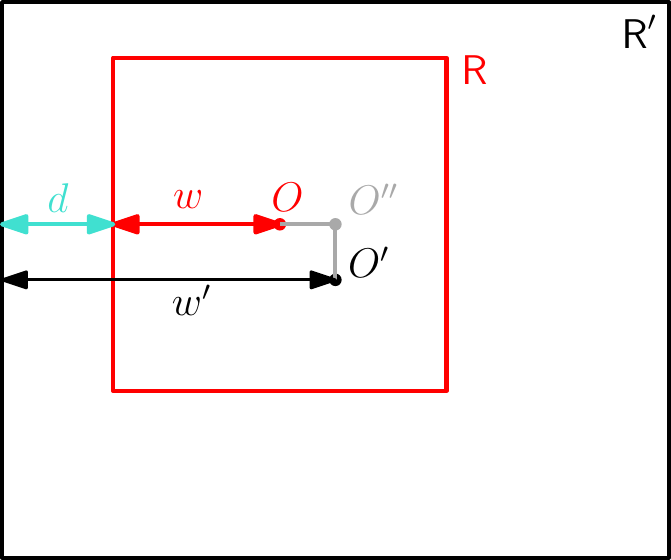}
\caption{The rectangle $\sig{R}'$ containing a smaller rectangle $\sig{R}$.}

\label{fig:lemma1}
\end{figure}

\begin{proof}
Let $O$ be the center of $\sig{R}$, $O'$ the center of $\sig{R}'$, and $w, w'$ be the half-widths of $\sig{R}$ and $\sig{R}'$ respectively. Consider, without loss of generality, the signed distance $d$ from the left side of $\sig{R}$ to $\sig{R}'$. Note from the figure that $d$ will be equal to
\begin{equation}
\nonumber
    d = w' - w - |OO''|
\end{equation}
where $|OO''|$ is the horizontal distance of the two centers, and is fixed. After the $k$-expansion of both rectangles, the new signed distance $d_k$ will be
\begin{equation}
\nonumber
    d_k = k \cdot w' - k \cdot w - |OO''| = k \cdot (w' - w) - |OO''| \geq (w' - w) - |OO''| = d 
\end{equation}
since $k \geq 1$. As a consequence, $d_k$ remains positive, and thus the expanded $\sig{R}$ is still contained in the expanded $\sig{R}'$. The same reasoning can be applied for all 4 boundaries of $\sig{R}'$.
\end{proof}

\subsection{Dataset}

An example image of the dataset we used in this study, along with the corresponding GT annotations, is shown in Figure~\ref{fig:ex_image}.

\begin{figure}
\centering
\includegraphics[width=0.6\textwidth]{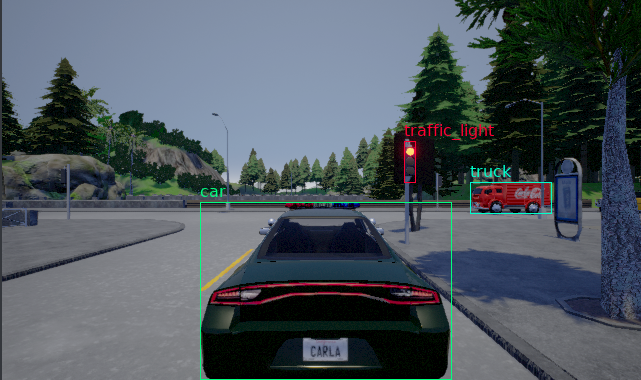}
\caption{Example image sample from the CARLA dataset with annotations used for the experiments.}
\label{fig:ex_image}
\end{figure}

\subsection{Evaluation of DNN Safety Post-Processors}\label{sec:invert_question}
By inverting Question~\ref{qu:research_question}, given a $k$ value, the minimum required IoU while still covering the whole GT label and achieving collision-freeness can be derived from Equation~\ref{eq:prop_main_equation}:
\begin{equation}\label{eq:iou_value}
    \sig{IoU} = \frac{2}{1+k}
\end{equation}

This means, given an example $k$ value of 1.5, we can compute a minimum required IoU (= 0.80 in this case) to fully cover an object in every possible case the IoU is equal or larger than this value. As a consequence, this calculation enables the IoU metric to be now connected to safe detection and being used to evaluate the performance of 2D bounding box object detection algorithms appropriately.

\end{document}